\patchcmd{\section}{\scshape}{\bfseries\scshape}{}{}
\patchcmd{\maketitle}{\@fnsymbol}{\@alph}{}{}  
\renewcommand{\@secnumfont}{\bfseries}
\title{SiMCa: Sinkhorn Matrix Factorization with Capacity Constraints}
\author{Eric Daoud$^1$, Luca Ganassali$^2$, Antoine Baker$^2$ \and Marc Lelarge$^2$\\ \vspace{0.5cm}
\emph{$^1$Inria, DI/ENS, Institut Curie, Paris, France\\%
    $^2$Inria, DI/ENS, Paris, France}
}
\begin{document}


\begin{abstract}
    For a very broad range of problems, recommendation algorithms have been increasingly used over the past decade. In most of these algorithms, the predictions are built upon user-item affinity scores which are obtained from  high-dimensional embeddings of items and users. In more complex scenarios, with geometrical or capacity constraints, prediction based on embeddings may not be sufficient and some additional features should be considered in the design of the algorithm. 
    
    In this work, we study the recommendation problem in the setting where affinities between users and items are based both on their embeddings in a latent space and on their geographical distance in their underlying euclidean space (e.g., $\mathbb{R}^2$), together with item capacity constraints. This framework is motivated by some real-world applications, for instance in healthcare: the task is to recommend hospitals to patients based on their location, pathology, and hospital capacities. In these applications, there is somewhat of an asymmetry between users and items: items are viewed as static points, their embeddings, capacities and locations constraining the allocation. Upon the observation of an optimal allocation, user embeddings, items capacities, and their positions in their underlying euclidean space, our aim is to recover item embeddings in the latent space; doing so, we are then able to use this estimate e.g. in order to predict future allocations.
    
    We propose an algorithm (SiMCa) based on matrix factorization enhanced with optimal transport steps to model user-item affinities and learn item embeddings from observed data. We then illustrate and discuss the results of such an approach for hospital recommendation on synthetic data.
\end{abstract}

\maketitle


\section{Introduction}

In a very broad range of applications -- many of them being led by e-commerce leaders (Amazon \cite{linden_amazoncom_2003}, Netflix \cite{koren_matrix_2009}) -- recommendation algorithms have been increasingly used over the past decade. These algorithms are capable of showing users a personalized selection of items they may like, based on their interests and user behavior.

Up to now, the predictions are built upon user-item affinity scores (e.g., user/movie ratings) which are obtained from high-dimensional embeddings of items and users. While these approaches work for most e-commerce applications, there are other natural settings in which more attributes should be considered in the recommendation process. For instance, item capacity constraints are of paramount importance in location or route recommendation, where recommending the same item to every user could lead to congestion and significantly deteriorate  user experience \cite{christakopoulou_recommendation_2017}. Moreover, in the case of location recommendation, travel distance is also a key factor: the user's choice is often the result of a trade-off between affinity and proximity \cite{zhao_survey_2016}. In the healthcare sector, patients are usually addressed to an hospital by their general practitioner -- or by word of mouth. Since the choice of hospital and practitioner may be critical, an important issue is to make sure that patients are routed to the best place possible -- namely to a nearby and adapted structure, without capacity saturation. 

In this work, we study the recommendation problem in the setting where affinities between users and items are based both on their embeddings embeddings in a latent space and on their geographical distance in their underlying euclidean space (e.g., $\mathbb{R}^2$), together with item capacity constraints.  Upon the observation of an optimal allocation, user embeddings, items capacities, and their positions in the euclidean space, our aim is to recover item embeddings in the latent space; doing so, we are then able to use this estimate e.g. in order to predict future allocations. Our contributions are as follows: 

\begin{itemize}
    \item[$(i)$] we propose an algorithm based on matrix factorization enhanced with optimal transport steps to model user-item affinities and learn item embeddings from observed data;
    \item[$(ii)$] we then illustrate and discuss the results of such an approach for hospital recommendation on synthetic data.
\end{itemize}

\subsubsection*{Paper organization} After reviewing related work, we formally define the problem in mathematical terms, we describe our algorithm for Sinkhorn Matrix Factorization with Capacity Constraints (SiMCa) and give theoretical guarantees on its convergence. We then illustrate our method for the hospital recommendation problem on synthetic data, discussing the results as well as the choice of parameters.


\section{Related work}
Hospital and practitioner recommendation has already been studied in the literature (see e.g. the survey \cite{tran_recommender_2021}). However, to the best of our knowledge, no existing method incorporates hospital capacity constraints in the algorithm training. This tends to refer many users to the same hospital, potentially saturating it and degrading the overall care quality. 

Matrix factorization \cite{koren_matrix_2009} is among the most popular collaborative filtering recommendation algorithms. Matrix factorization characterizes every user $i$ and item $j$ by high-dimensional embeddings $u_i, v_j$, and predict the user-item affinity by the inner product $\langle u_i, v_j \rangle$. This method has already been applied for patient/doctor recommendation \cite{zhang_idoctor_2017, han_hybrid_2018}. However, regular matrix factorization is usually applied to simple recommendation problems, such as movie recommendation: as already explained before, recommending locations brings new challenges and requires a different approach \cite{zhao_survey_2016}. 

Geographical influence has been integrated in the matrix factorization framework to recommend locations or points of interest (POIs) \cite{li_rank-geofm_2015}: moreover, the learning algorithm can be adapted by adding a capacity term in the loss function \cite{christakopoulou_recommendation_2017}.

The Monge-Kantorovitch formulation of the classical Optimal Transport (OT) problem can be rephrased as a linear program that can be computationally slow and unstable in high dimension \cite{cuturi_sinkhorn_2013}: this problem is often approximated by adding an entropy regularization term, and easily solved by Sinkhorn-Knopp's algorithm \cite{cuturi_sinkhorn_2013}. Another important advantage of this regularization is that the solution of the OT problem becomes differentiable with respect to the parameters, which explains why this step is integrated in many learning algorithms \cite{genevay_learning_2017,cuturi_soft-dtw_2018,tai_sinkhorn_2021}.

Most relevant for the present paper is the work from Dupuy, Galichon and Sun \cite{dupuy_estimating_2016}. In this study, the authors address the inverse optimal transport problem, that is, given vectors of characteristics $\bX \in \mathbb{R}^d$ and $\bY \in \mathbb{R}^{d'}$ and the joint distribution of the optimal matching, the problem of recovering the affinity function of the form $\phi(\bX,\bY)=\bX^T \bA \bY$, namely to estimate matrix $\bA$. The authors are in the setting where they observe pairs of embeddings $(\bX_t,\bY_t)$ together with the optimal \emph{regularized matching} $\pi^*$ -- that is the solution to problem \eqref{eq:piSinkhorn} hereafter -- and build an estimator of $\bA$ with low-rank constraints, the objective being to isolate important characteristics that carry the most important weight in the matching procedure between $x$ and $y$. We stress the fact that the setting is different in our study: 
we only observe in our case the embeddings $\bU$ of the users and a distance matrix $\bD$, function $\phi$ is known as well as the \emph{pure matching} $\sigma^*$ -- that is the solution of the linear assignment problem \eqref{eq:sigmastarLAP} hereafter, which differs from $\pi^*$ -- and the aim is to infer item embeddings $\bV$. In other words, we do not seek to reconstruct the affinity matrix, but for the learning of items' positions in the user's embeddings space, these positions acting as reference points, upon which prediction of future allocations can be made. Another difference is that the number of items is typically very small compared to the number of users, which justifies that the items are considered static: we also incorporate \emph{capacity constraints} on the allocation problem. 


\section{Problem definition}

\subsection*{A model for latent and geographical affinity}

The setting of the problem is as follows. Consider $n$ \emph{users} $x_1, \ldots, x_n$ embedded in a latent space $\cX$ identified to $\mathbb{R}^d$, with embeddings given by $\bU_1, \ldots, \bU_n$. Also consider $m$ \emph{items} $y_1,\ldots, y_m$ embedded in $\cX$ with embeddings $\bV_1, \ldots, \bV_m$, with $m \leq n$. To each user $x_i$ we assign a single item $y_j$, according to an \emph{affinity matrix} $\bM \in \mathbb{R}^{n \times m}$ given by
\begin{equation*}\label{eq:def_M}
    \bM_{i,j} := \Phi(\bU_i,\bV_j,\bD_{i,j}),
\end{equation*} where $\bD \in \mathbb{R}^{n \times m}$ is known and may be thought of e.g. as a geographical distance matrix between users and items in the underlying euclidean space, say $\mathbb{R}^2$ (we stress the fact that this space is \emph{not} the embedding space $\cX$). We will denote $\bM = \Phi(\bU,\bV,\bD)$ in the sequel. 

We also work under the following constraints: each item $y_j, j \in [m]$ can be assigned to at most $\bC_j$ users. Where $\bC=(\bC_1,\ldots,\bC_m)$ is \emph{capacity} vector. The \emph{total capacity} is defined by 
\begin{equation*}
    s(\bC) := \sum_{j \in [m]} \bC_j,
\end{equation*} and we will assume $s(\bC) = n$.
We define
\begin{equation*}\label{eq:defSigma}
    \Sigma(n,m,\bC) := \left\lbrace \sigma \in \left\{ 0,1\right\}^{n \times m}, \; {\sigma \mathbf{1}_m}  = { \mathbf{1}_n}, {\sigma^T \mathbf{1}_n} = \bC \right\rbrace.
\end{equation*} In the sequel, $\sigma$ will denote both the assignment and its corresponding matrix representation. The optimal assignment $\sigma^*$ is given by
\begin{equation}\label{eq:sigmastarLAP}
    \sigma^*(\bU,\bV,\bD,\bC) := \argmax_{\sigma \in \Sigma(n,m,\bC)} \Tr\left( \sigma^T \bM \right),
\end{equation}
Note that problem \eqref{eq:sigmastarLAP} is an instance of the \emph{Linear Assignment problem} (LAP).

\subsection*{Goal} Assume that we are given the user embeddings $\bU$, the distance matrix $\bD$, the capacities $\bC$ and the optimal assignment $\sigma^* \in \Sigma(n,m,\bC)$. The goal is to learn the item embeddings $\bV$.

\subsection*{Loss metrics, regularization and relaxation}
We will evaluate the performance of a proposed estimate $\widehat{\bV}$ of $\bV$ through the assignment $\widehat{\pi}$ obtained with $\widehat{\bV}$. To compare $\widehat{\pi}$ with $\sigma^*$, we use the usual \emph{cross entropy loss} defined by
\begin{equation*}
    H(\sigma^*,\widehat{\pi}) := - \sum_{i \in [n]} \log{\widehat{\pi}_{i,\sigma^*(i)}} = - \Tr\left( (\sigma^*)^T (\log \widehat{\pi}) \right).
\end{equation*}



As stated before, from a learning perspective, a main issue is that the solution to problem \eqref{eq:sigmastarLAP} is not differentiable w.r.t. $\bV$, the variable of interest. This issue is solved by a relaxation/regularization procedure \cite{cuturi_sinkhorn_2013}:
\begin{itemize}
    \item since the objective function is linear, we first consider the classical relaxation of \eqref{eq:sigmastarLAP} on the polytope of the convex hull of $\Sigma(n,m,\bC)$, namely on
    \begin{equation*}\label{eq:defPi}
    \Pi(n,m,\bC) := \left\lbrace \pi \in [0,1]^{n \times m}, \; {\pi \mathbf{1}_m}  = { \mathbf{1}_n}, {\pi^T \mathbf{1}_n} = \bC \right\rbrace.
    \end{equation*}
    \item moreover, we regularize the objective function in order to perform (automatic) differentiation: this is made possible by the classical entropy regularization in optimal transport.
\end{itemize}
For a small regularization parameter $\eps>0$, the problem then becomes
\begin{equation}\label{eq:piSinkhorn}
    \pi_{\eps}^*(\bU,\bV,\bD,\bC) := \argmax_{\pi \in \Pi(n,m,\bC)} \left[\Tr\left( \pi^T \bM \right) + \eps H(\pi) \right],
\end{equation} where
\begin{equation}\label{eq:Hpi}
    H(\pi) := - \sum_{1 \leq i,j \leq n} \pi_{i,j} (\log \pi_{i,j}-1).
\end{equation}

It is known in the literature \cite{cuturi_sinkhorn_2013} that the solution $\pi_{\eps}^*$ to the convex optimization problem \eqref{eq:piSinkhorn} can be easily computed with Sinkhorn-Knopp’s algorithm, and has the following form:
\begin{equation}\label{eq:formesinkhorn}
    \left(\pi_{\eps}^* \right)_{i,j} = a_i \exp\left( \frac{1}{\eps} \bM_{i,j} \right) b_j,
\end{equation} where $a$ and $b$ are vectors of $\mathbb{R}_+^n$ and $\mathbb{R}_+^m$.
Note that we are back to our initial problem \eqref{eq:sigmastarLAP} when $\eps=0$.

\subsection*{SiMCa Algorithm}
With this new formulation \eqref{eq:piSinkhorn}, we are now able to design an optimization scheme for our learning problem. In our setting the users embeddings $\bU$, the distance matrix $\bD$ and the capacities $\bC$ are known, only the items embeddings $\bV$ are learned. The overall procedure is summarized in Algorithm \ref{algo_learning_embeddings}. Given the current estimate $\bV_t$ at iteration $t$, we compute the solution $\pi^*_{\eps}(\bV_t)$ to problem \eqref{eq:piSinkhorn}, which in turn is used to compute the gradient in $\bV_t$ of the following loss
\begin{equation}\label{eq:loss_global}
    \loss(\bV_t) := H\left(\sigma^*,\pi^*_{\eps}(\bV_t)\right)
\end{equation} to update our estimate of $\bV$ through a gradient step. The gradient in $\bV$
has actually a simple analytical expression:

\begin{lemma}\label{lemma:gradient}
We have
    \begin{equation}
    \label{eq:gradient}
    \nabla_\bV \loss(\bV) = \frac{1}{\eps} \sum_{1 \leq i,j \leq n} (\pi^*_\eps(\bV) - \sigma^*)_{i,j} \nabla_\bV \bM_{i,j} \, .
\end{equation}
\end{lemma}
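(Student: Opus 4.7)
The plan is to unfold $\loss(\bV)$ using the closed form \eqref{eq:formesinkhorn} and then exploit the marginal constraints shared by $\sigma^*$ and $\pi^*_\eps$ in order to eliminate the implicit dependence of the Sinkhorn potentials $a=a(\bV)$ and $b=b(\bV)$ on $\bV$. This is essentially an envelope-theorem argument carried out by hand.

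First, I would write
\[
\loss(\bV) = -\sum_{i,j} \sigma^*_{i,j} \log (\pi^*_\eps(\bV))_{i,j} = -\sum_{i,j} \sigma^*_{i,j}\left(\log a_i + \tfrac{1}{\eps}\bM_{i,j} + \log b_j\right).
\]
Since both $\sigma^*$ and $\pi^*_\eps(\bV)$ belong to $\Pi(n,m,\bC)$, they share the same row- and column-marginals, hence $\sum_{i,j}\sigma^*_{i,j}\log a_i = \sum_i \log a_i = \sum_{i,j}(\pi^*_\eps)_{i,j}\log a_i$ and likewise for $\log b_j$. Substituting $\log a_i + \log b_j = \log(\pi^*_\eps)_{i,j} - \bM_{i,j}/\eps$ back into those two sums yields the identity
\[
\loss(\bV) \;=\; -\sum_{i,j}(\pi^*_\eps)_{i,j}\log(\pi^*_\eps)_{i,j} \;+\; \frac{1}{\eps}\sum_{i,j}\bigl((\pi^*_\eps)_{i,j} - \sigma^*_{i,j}\bigr)\bM_{i,j}.
\]
This rewriting is the key step: it replaces $\sigma^*$ by $\pi^*_\eps$ in every term that is not multiplying $\bM_{i,j}$, so that all remaining $\sigma^*$-dependence is confined to the last sum.

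Next I would differentiate. The product-rule terms that are awkward are those hitting $(\pi^*_\eps)_{i,j}$ rather than $\bM_{i,j}$; collecting them gives
\[
\sum_{i,j} \nabla_\bV(\pi^*_\eps)_{i,j}\cdot\Bigl(-1 - \log(\pi^*_\eps)_{i,j} + \tfrac{1}{\eps}\bM_{i,j}\Bigr) \;=\; \sum_{i,j}\nabla_\bV(\pi^*_\eps)_{i,j}\cdot\bigl(-1-\log a_i-\log b_j\bigr),
\]
after using \eqref{eq:formesinkhorn} again. Now I would show that each of the three pieces vanishes: the constant $-1$ piece disappears because $\sum_{i,j}(\pi^*_\eps)_{i,j}=n$ is independent of $\bV$; the $\log a_i$ piece disappears because $\sum_j(\pi^*_\eps)_{i,j}=1$ for each $i$; the $\log b_j$ piece disappears because $\sum_i(\pi^*_\eps)_{i,j}=\bC_j$ for each $j$. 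In each case, one factors the potential out of the sum and differentiates a constant.

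The main (and only) obstacle is precisely the implicit dependence of $a$ and $b$ on $\bV$, which a naive differentiation would force us to compute; the marginal constraints defining $\Pi(n,m,\bC)$ are exactly what allow us to avoid this computation. What remains after the three vanishing pieces is
\[
\nabla_\bV \loss(\bV) \;=\; \frac{1}{\eps}\sum_{i,j}\bigl((\pi^*_\eps)_{i,j} - \sigma^*_{i,j}\bigr)\,\nabla_\bV \bM_{i,j},
\]
which is \eqref{eq:gradient}. Implicit differentiability of $\pi^*_\eps(\bV)$ in the first place is guaranteed by strong convexity of the entropy-regularized objective, a standard fact for Sinkhorn problems.
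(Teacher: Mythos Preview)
Your proof is correct and follows essentially the same route as the paper: you derive the same identity for $\loss(\bV)$ (the paper writes it as $-s(\bC)+\tfrac{1}{\eps}[V_\eps(\bM)-\Tr(\sigma^{*T}\bM)]$, which coincides with your expression up to the constant $-s(\bC)$), and then differentiate. The only cosmetic difference is that the paper invokes the envelope-theorem identity $\partial V_\eps/\partial \bM_{i,j}=(\pi^*_\eps)_{i,j}$ as a cited black box, whereas you verify it directly by showing the $\nabla_\bV(\pi^*_\eps)_{i,j}$ contributions cancel against the marginal constraints---exactly the ``envelope-theorem by hand'' you announced.
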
 

\begin{proof}
A very similar expression for the gradient is derived for the maximum likelihood in \cite{dupuy_estimating_2016}. We straightforwardly adapt their derivation to the cross entropy loss \eqref{eq:loss_global}.
Let us denote
\begin{equation}
\label{eq:defV}
V_\eps(\bM) = \max_{\pi \in \Pi(n,m,\bC)} \left[\Tr\left( \pi^T \bM \right) + \eps H(\pi) \right]
\end{equation}
the optimal value of the regularized OT problem \eqref{eq:piSinkhorn}. As well-known in the OT literature, see Proposition 9.2 of \cite{peyre_computational_2020}, its gradient with respect to the affinity matrix $M$ is given by the optimal coupling
\begin{equation}
\label{eq:gradV}
\frac{\partial}{\partial \bM_{i, j}} V_\eps(\bM) =  (\pi^*_\eps)_{i, j}.
\end{equation}
Our cross-entropy loss \eqref{eq:loss_global} is directly related to the optimal value $V_\eps(\bM)$:
\begin{flalign*}
    \loss = H\left(\sigma^*,\pi^*_{\eps}\right) &
          = - \sum_{i,j} \sigma^*_{i,j} \ln (\pi^*_\eps)_{i,j} \tag*{} \\
          &\stackrel{1}{=} - \sum_{i,j} \sigma^*_{i,j} (\tfrac{1}{\eps} \bM_{i,j} + \ln a_i + \ln b_j) \\
          &\stackrel{2}{=} -  \sum_{i,j} \sigma^*_{i,j} \tfrac{1}{\eps} \bM_{i,j}  - \sum_{i,j} (\pi^*_\eps)_{i,j} (\ln a_i + \ln b_j) \\
          &\stackrel{3}{=} -  \sum_{i,j} \sigma^*_{i,j} \tfrac{1}{\eps} \bM_{i,j}  - \sum_{i,j} (\pi^*_\eps)_{i,j} (\ln (\pi^*_\eps)_{i,j} - \tfrac{1}{\eps} \bM_{i,j}) \\
          &\stackrel{4}{=} -  \sum_{i,j} \sigma^*_{i,j} \tfrac{1}{\eps} \bM_{i,j}  - s(\bC) \\
          &- \sum_{i,j} (\pi^*_\eps)_{i,j} (\ln (\pi^*_\eps)_{i,j} - 1) + \sum_{i,j} (\pi^*_\eps)_{i,j} \tfrac{1}{\eps} \bM_{i,j} \\
          &\stackrel{5}{=} -s(\bC) + \tfrac{1}{\eps} [ 
          \Tr(\pi^{*T}_\eps \bM) + \eps H(\pi^*_\eps) - \Tr(\sigma^{*T}\bM)
          ] \\
          &\stackrel{6}{=} -s(\bC) + \tfrac{1}{\eps} [ 
          V_\eps(\bM) - \Tr(\sigma^{*T}\bM)]. &&
\end{flalign*}

The first and third equalities follow from \eqref{eq:formesinkhorn}, the second and fourth from $\sigma^*, \pi^*_\eps \in \Pi(n,m,\bC)$, the fifth from the definition \eqref{eq:Hpi} of $H(\pi)$ and the sixth from the definition \eqref{eq:defV} of $V_\eps(\bM)$.
Then differentiating with respect to $\bV$ leads to \eqref{eq:gradient} by the chain rule and \eqref{eq:gradV}.
\end{proof}

\begin{algorithm}[h]
\caption{\label{algo_learning_embeddings} Sinkhorn Matrix Factorization with Capacity Constraints (SiMCa)}
\begin{flushleft}
    \textbf{Input:} $\bU, \bD, \bC, \sigma^*$

    For $t =1$ to $T$:
    
    \begin{itemize}
    	\item[$1.$] Compute the affinity matrix $\bM_{t-1} = \Phi(\bU,\bV_{t-1},\bD)$. 
    	
    	\item[$2.$]  Compute the solution to the optimization problem \eqref{eq:piSinkhorn}:
    	\begin{equation*}
    	\pi_{\eps}^*(\bV_{t-1}) := \argmax_{\pi \in \Pi(n,m,\bC)} \left[\Tr\left( \pi^T \bM_{t-1} \right) + \eps H(\pi) \right].
    	\end{equation*}
    	
    	\item[$3.$]  Compute the gradient $\nabla \loss(\bV_{t-1})$ with equation \eqref{eq:gradient}.
    	
    	\item[$4.$]  Perform a gradient step $\bV_{t} = \bV_{t-1} - \eta \nabla \loss(\bV_{t-1})$.
    \end{itemize}
\textbf{return} $\bV_T$
\end{flushleft}
\end{algorithm}

The performance of our method is guaranteed by the following:
\begin{lemma}\label{lemma:convergence_guarantee}
Assume that $v \mapsto \Phi(u,v,d)$ is linear. Then the loss function \eqref{eq:loss_global} is convex in $\bV$ and the output of SiMCa Algorithm (Algo. \ref{algo_learning_embeddings}) converges to 
\begin{equation*}
    \argmin_{\bV} H\left(\sigma^*,\pi^*_{\eps}(\bV)\right).
\end{equation*}
\end{lemma}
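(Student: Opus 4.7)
The plan is to exploit the closed-form expression
\[\loss(\bV) = -s(\bC) + \tfrac{1}{\eps}\bigl[V_\eps(\bM) - \Tr(\sigma^{*T}\bM)\bigr]\]
already established in the proof of Lemma~\ref{lemma:gradient}, where $\bM = \Phi(\bU,\bV,\bD)$ and $V_\eps$ is defined in \eqref{eq:defV}. Once this identity is in hand, convexity of $\loss$ reduces to convexity of $\bV \mapsto V_\eps(\bM)$ plus a term affine in $\bV$.

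First, I would establish convexity of $\bM \mapsto V_\eps(\bM)$. By \eqref{eq:defV}, $V_\eps$ is a pointwise supremum of the functions $\bM \mapsto \Tr(\pi^T \bM) + \eps H(\pi)$ indexed by $\pi \in \Pi(n,m,\bC)$, each of which is affine in $\bM$; hence $V_\eps$ is convex in $\bM$. The remaining term $-\tfrac{1}{\eps}\Tr(\sigma^{*T} \bM)$ is linear in $\bM$, so $\loss$, viewed as a function of $\bM$, is convex. Under the assumption that $v \mapsto \Phi(u,v,d)$ is linear, the map $\bV \mapsto \bM$ is linear, and convexity of $\loss$ in $\bV$ follows from composition of a convex function with a linear map.

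For the convergence of SiMCa, I would invoke a standard result for gradient descent on convex, smooth functions. The entropy regularization with $\eps > 0$ ensures that $\nabla_\bM V_\eps = \pi^*_\eps$ depends in a Lipschitz manner on $\bM$ (see e.g.\ \cite{peyre_computational_2020}), which combined with the linearity of $\bV \mapsto \bM$ yields $L$-smoothness of $\loss$ on any bounded domain. Choosing a step size $\eta \leq 1/L$ then guarantees convergence of $\bV_t$ toward a minimizer by classical gradient-descent theory for convex $L$-smooth functions.

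The main subtlety would be producing an explicit and \emph{global} estimate of $L$, i.e.\ controlling the Jacobian of the Sinkhorn coupling uniformly in $\bM$, which degrades as $\eps \to 0$ and as entries of $\bM$ grow; since the statement asserts only qualitative convergence, this can be sidestepped by working on a bounded sublevel set of $\loss$, but it is the point where extra care is needed if one wanted a quantitative rate.
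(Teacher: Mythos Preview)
Your convexity argument is essentially identical to the paper's: both start from the identity $\loss(\bV) = -s(\bC) + \tfrac{1}{\eps}[V_\eps(\bM) - \Tr(\sigma^{*T}\bM)]$ obtained in the proof of Lemma~\ref{lemma:gradient}, observe that $V_\eps$ is a pointwise supremum of affine functions of $\bM$ hence convex, and then use linearity of $\bV \mapsto \bM$ to pull convexity back to $\bV$.

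Where you differ is in the convergence claim. The paper's proof stops after establishing convexity and does not justify why the gradient iterates of Algorithm~\ref{algo_learning_embeddings} actually converge to a minimizer. You go further, invoking Lipschitz continuity of $\bM \mapsto \pi^*_\eps(\bM)$ to obtain $L$-smoothness of $\loss$ and then classical descent theory. This is the right instinct: convexity alone does not guarantee convergence of fixed-step gradient descent, so some smoothness or step-size control is needed, and your sketch supplies exactly that. Your caveat about the global Lipschitz constant and the sublevel-set workaround is also appropriate. In short, your proposal is at least as complete as the paper's on convexity and strictly more careful on convergence.
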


\begin{proof}
The proof of Lemma \ref{lemma:gradient} shows that
\begin{equation*}
    \loss(V) = -s(\bC) + \tfrac{1}{\eps} [ 
          V_\eps(\bM) - \Tr(\sigma^{*T}\bM)].
\end{equation*}
Since $V \mapsto \Phi(\bU,V,\bD)$ is linear, $V \mapsto V_\eps(\bM)$, as defined in \eqref{eq:defV} is convex as a maximum of convex functions. By assumption, $V \mapsto \Tr(\sigma^{*T}\bM)$ is linear, thus $V \mapsto \loss(V)$ is convex.
\end{proof}


\section{Illustration for the hospital recommendation problem}
We now describe an illustration of our method for the hospital recommendation problem. Since very few open datasets are available for this problem, we trained our algorithm on synthetic data. 

\subsection*{Dataset generation}
The dataset is generated as follows:
\begin{itemize}
    \item \textbf{Features in the embedding (latent) space:} we sample $n+m$ points from a Gaussian mixture model with $k$ clusters. We set these points as either users ($\bU_i$) or items ($\bV_i$), and considered that each cluster must contain at least one item: we are thus left with $n$ users and $m$ items, spread between $k$ clusters. Users and items in the same cluster are considered similar. We then normalized both users and items features, so that all embeddings $\bU_i$ and $\bV_j$ lie on the unit sphere. Note that the users and items sampling is done independently of items capacities. 
    
    \item \textbf{Distance in the underlying euclidean space:} to sample the distance matrix $\bD$ between users and items, we sample all the positions randomly on a circle, and computed the great-circle distance (i.e. spherical distance) between every users $i$ and items $j$. We finally normalize the distance matrix by its overall mean.
    
    \item \textbf{Capacities} we sampled $m$ values from a Dirichlet Distribution, corresponding to the probabilities that users are assigned to the $m$ items. We converted these probabilities into capacities $\bC_j$ by multiplying them with the number of users $n$. We then added some extra spots to each item. 
\end{itemize}

\subsubsection*{Affinity matrix}
In our case, the affinity matrix $\bM=\Phi(\bU,\bV,\bD)$ is defined as follows:
\begin{equation}\label{eq:phi_def_illustration}
    \bM_{i,j}=\Phi(\bU_i,\bV_j,\bD_{i,j}) = (1-\alpha) \bU_i^T \bV_j - \alpha \bD_{i,j}.
\end{equation} The $\alpha$ coefficient measures the trade-off between affinity and proximity.

We then solve the Linear Assignment Problem \eqref{eq:sigmastarLAP} to compute the pure matching $\sigma^*$.

\subsubsection*{Noise}
Noise is added to the original dataset in two different ways. The first method is to modify the allocations of random users in $\sigma^*$, the noise ratio being defined as the percentage of modified allocations\footnote{to make sure that the capacities constraints on the items still hold, we must swap \emph{pairs of users}: for a given allocation to modify, we pick another user randomly and swap their allocations.}. The second method consists in perturbating $\bU$ as follows:
\begin{equation*}
    \widetilde{\bU} := \sqrt{1-\rho^2} \bU + \rho \bZ,
\end{equation*} where $\bZ$ is a matrix with i.i.d. standard Gaussian entries, and $\rho$ is the noise ratio. 

\subsubsection*{Learning the embeddings}
Given $\bU$, $\bD$, $\bC$, $\sigma^*$, $\alpha$ and $\eps$, we compute an estimate $\widehat{\bV}$ of the item embeddings with SiMCa Algorithm (Algo. \ref{algo_learning_embeddings}). Comparing $\widehat{\bV}$ with $\bV$ gives a first measure of the training performance.

\subsubsection*{Recovering the pure matching}
Then, using $\widetilde{\bU}$ (the noisy version of $\bU$), $\widehat{\bV}$ (the estimated $\bV$), $\bD$, $\alpha$ and $\eps$, we compute the solution $\widehat{\pi}^*_{\eps}$ to problem \eqref{eq:piSinkhorn}. Solving the linear assignment problem (LAP) on matrix $\widehat{\pi}^*_{\eps}$, we compute a pure matching $\widehat{\sigma}^*$, which we can next compare to the original ${\sigma}^*$, giving a second measure of the training performance.


\section{Results}

\subsubsection*{Parameters}
We generated a toy dataset with the following parameters: $n=1000$ users; $m=3$ items; $d=2$ latent features; $k=3$ clusters; $\alpha=0.3$. The items capacities were 257, 417 and 356. Figure \ref{fig:toy_dataset} shows the generated users and items in both the embeddings (latent) space and their underlying euclidean space.

\begin{figure}[h]
    \centering
    \includegraphics[width=.9\columnwidth]{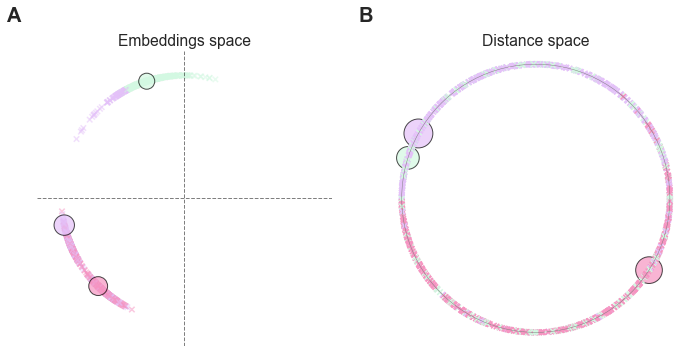}
    \caption{Generated dataset. Users are displayed as crosses and items as circles, sized proportionally to their capacities. Users are colored accordingly to the item they have been allocated to. Plot (A) displays users and items in their shared embeddings (latent) space; where plot (B) displays them in their underlying euclidean space.}
    \label{fig:toy_dataset}
\end{figure}

\subsubsection*{Training results}
We trained our model with $\eps=0.1$ entropy regularization and $10$ iterations in Sinkhorn-Knopp’s algorithm to output $\widehat{\bV}$. As mentioned earlier we compute a solution to the LAP on matrix $\widehat{\pi}^*_{\eps}$ to output the estimated allocation $\widehat{\sigma}^*$. The model was trained with Adam optimizer, with a $0.01$ learning rate and $400$ epochs. For the measures of performance, we used the F1 score, for measuring how well the allocations are reproduced, and the mean euclidean distance between learned embeddings $\widehat{\bV}$ and the ground truth $\bV$. The training results are displayed on Figure \ref{fig:learned_embeddings}.

\begin{figure}[h]
    \centering
    \includegraphics[width=.9\columnwidth]{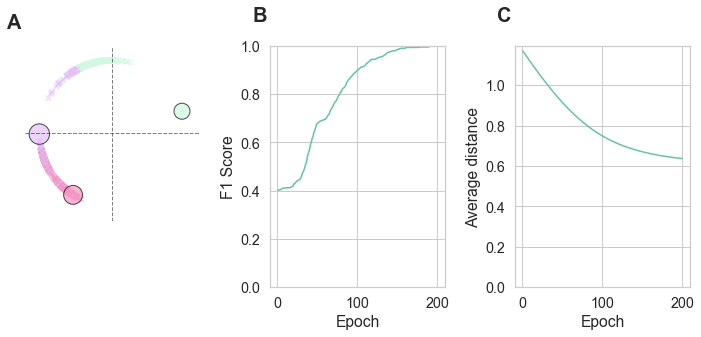}
    \caption{
        Training results. We can see that the model achieves good performances to learn the item embeddings (A), and recovers the allocation with a close to 1 F1 score (B). The average distance between the learned embeddings and ground truth decreases during training (C).
    }
    \label{fig:learned_embeddings}
\end{figure}

\subsubsection*{Influence of entropy regularization}
We investigate the influence of the entropy regularization parameter $\eps$ on the model performance. We let $\eps$ vary between $0.05$ and $2$, with the same dataset and the same model parameters\footnote{due to numerical instability, the algorithm could not train properly above $\eps = 0.05$.}. We ran 5 training for every value of $\eps$. As shown on Figure \ref{fig:experiment_epsilon}, the training performance worsens when $\eps$ increases. 

\begin{figure}[h]
    \centering
    \includegraphics[width=.45\columnwidth]{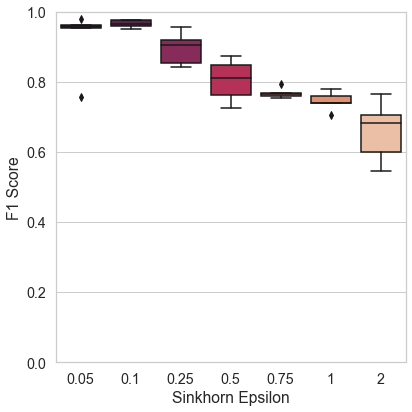}
    \caption{
        Performance as a function of $\eps$. Increasing $\eps$ leads to lower F1 scores.
    }
    \label{fig:experiment_epsilon}
\end{figure}

\subsubsection*{Influence of noise}
We study the influence of noise, either by swapping allocations or adding Gaussian noise to the used embeddings, as described in the previous section. Unsurprisingly, as shown in Figure \ref{fig:experiment_noise}, the training performance is decreasing with the noise ratio.

\begin{figure}[h]
    \centering
    \includegraphics[width=.9\columnwidth]{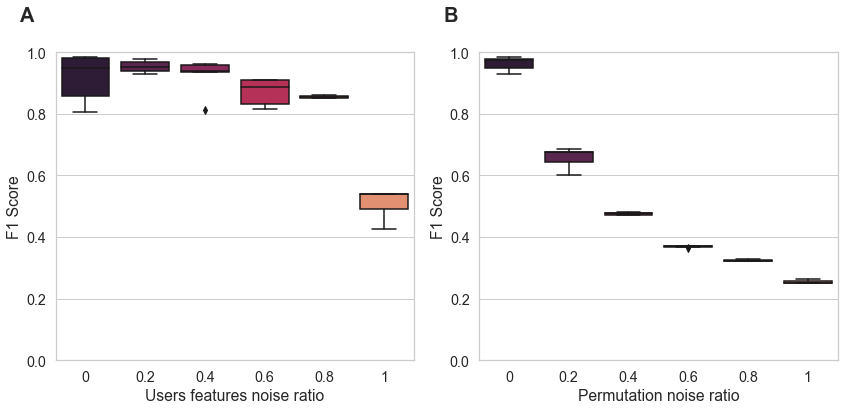}
    \caption{
        Influence of adding Gaussian noise (A) and swapping allocations (B) on training performance. F1 score decreases as noise increases.
    }
    \label{fig:experiment_noise}
\end{figure}

\subsubsection*{Learning both users and items embeddings}

We also studied the case where the users' embeddings are not known, and must be learned jointly with the items' embeddings from the observed allocations. In this case, we initialized the items' and users' embeddings similarly. We managed to retrieve the observed allocation as illustrated on Figure \ref{fig:experiment_learn_users_embeddings}. However, the average distance between learned items embeddings and ground truth does not decrease during training, meaning that the model learned its own interpretation of the users' and items' representations to satisfy the observed mapping. 

\begin{figure}[h]
    \centering
    \includegraphics[width=.9\columnwidth]{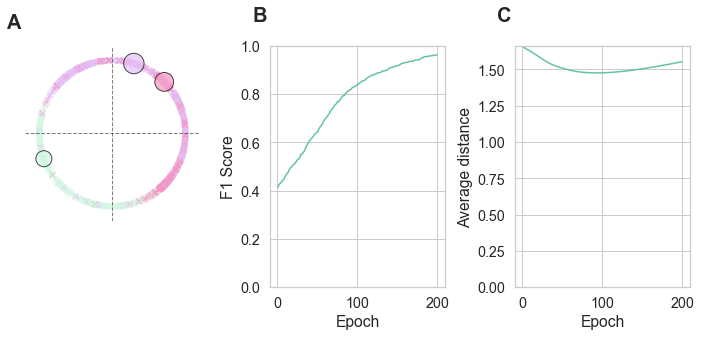}
    \caption{
        Learning both users and items embeddings simultaneously. The learned embeddings are shown on (A). The model retrieves the observed allocation (B). However, the average distance between learned items embeddings and ground truth does not decrease during training (C).
    }
    \label{fig:experiment_learn_users_embeddings}
\end{figure}


\section{Conclusions}

In this work, we introduced SiMCa, an algorithm based on matrix factorization and optimal transport to model user-item affinities and learn item embeddings from observed data. 
SiMCa can be used in recommendation problems where allocations between users and items are based on: their affinity in a latent space; a geographical distance in their underlying euclidean space; capacity constraints on the items. We illustrated our method for the hospital recommendation task; however, we believe that there are many other problems for which SiMCa algorithm may be useful. 


\section*{Acknowledgments}
This work was partially supported by the French government under management of Agence Nationale de la Recherche as part of the “Investissements d’avenir” program, reference ANR19-P3IA-0001 (PRAIRIE 3IA Institute).


\bibliography{main.bib}

\begin{thebibliography}{10}

\bibitem{christakopoulou_recommendation_2017}
Konstantina Christakopoulou, Jaya Kawale, and Arindam Banerjee.
\newblock Recommendation with {Capacity} {Constraints}.
\newblock In {\em Proceedings of the 2017 {ACM} on {Conference} on
  {Information} and {Knowledge} {Management}}, pages 1439--1448, Singapore
  Singapore, November 2017. ACM.

\bibitem{cuturi_sinkhorn_2013}
Marco Cuturi.
\newblock Sinkhorn {Distances}: {Lightspeed} {Computation} of {Optimal}
  {Transportation} {Distances}.
\newblock {\em arXiv:1306.0895 [stat]}, June 2013.
\newblock arXiv: 1306.0895.

\bibitem{cuturi_soft-dtw_2018}
Marco Cuturi and Mathieu Blondel.
\newblock Soft-{DTW}: a {Differentiable} {Loss} {Function} for {Time}-{Series}.
\newblock {\em arXiv:1703.01541 [stat]}, February 2018.
\newblock arXiv: 1703.01541.

\bibitem{dupuy_estimating_2016}
Arnaud Dupuy, Alfred Galichon, and Yifei Sun.
\newblock Estimating matching affinity matrix under low-rank constraints.
\newblock {\em arXiv:1612.09585 [stat]}, December 2016.
\newblock arXiv: 1612.09585.

\bibitem{genevay_learning_2017}
Aude Genevay, Gabriel Peyré, and Marco Cuturi.
\newblock Learning {Generative} {Models} with {Sinkhorn} {Divergences}.
\newblock {\em arXiv:1706.00292 [stat]}, October 2017.
\newblock arXiv: 1706.00292.

\bibitem{han_hybrid_2018}
Qiwei Han, Mengxin Ji, Inigo Martinez de Rituerto~de Troya, Manas Gaur, and
  Leid Zejnilovic.
\newblock A {Hybrid} {Recommender} {System} for {Patient}-{Doctor}
  {Matchmaking} in {Primary} {Care}.
\newblock In {\em 2018 {IEEE} 5th {International} {Conference} on {Data}
  {Science} and {Advanced} {Analytics} ({DSAA})}, pages 481--490, October 2018.

\bibitem{koren_matrix_2009}
Yehuda Koren, Robert Bell, and Chris Volinsky.
\newblock Matrix {Factorization} {Techniques} for {Recommender} {Systems}.
\newblock {\em Computer}, 42(8):30--37, August 2009.
\newblock Conference Name: Computer.

\bibitem{li_rank-geofm_2015}
Xutao Li, Gao Cong, Xiao-Li Li, Tuan-Anh~Nguyen Pham, and Shonali Krishnaswamy.
\newblock Rank-{GeoFM}: {A} {Ranking} based {Geographical} {Factorization}
  {Method} for {Point} of {Interest} {Recommendation}.
\newblock In {\em Proceedings of the 38th {International} {ACM} {SIGIR}
  {Conference} on {Research} and {Development} in {Information} {Retrieval}},
  {SIGIR} '15, pages 433--442, New York, NY, USA, August 2015. Association for
  Computing Machinery.

\bibitem{linden_amazoncom_2003}
G.~Linden, B.~Smith, and J.~York.
\newblock Amazon.com recommendations: item-to-item collaborative filtering.
\newblock {\em IEEE Internet Computing}, 7(1):76--80, January 2003.
\newblock Conference Name: IEEE Internet Computing.

\bibitem{peyre_computational_2020}
Gabriel Peyré and Marco Cuturi.
\newblock Computational {Optimal} {Transport}.
\newblock {\em arXiv:1803.00567 [stat]}, March 2020.
\newblock arXiv: 1803.00567.

\bibitem{tai_sinkhorn_2021}
Kai~Sheng Tai, Peter Bailis, and Gregory Valiant.
\newblock Sinkhorn {Label} {Allocation}: {Semi}-{Supervised} {Classification}
  via {Annealed} {Self}-{Training}.
\newblock {\em arXiv:2102.08622 [cs, stat]}, June 2021.
\newblock arXiv: 2102.08622.

\bibitem{tran_recommender_2021}
Thi Ngoc~Trang Tran, Alexander Felfernig, Christoph Trattner, and Andreas
  Holzinger.
\newblock Recommender systems in the healthcare domain: state-of-the-art and
  research issues.
\newblock {\em Journal of Intelligent Information Systems}, 57(1):171--201,
  August 2021.

\bibitem{zhang_idoctor_2017}
Yin Zhang, Min Chen, Dijiang Huang, Di~Wu, and Yong Li.
\newblock {iDoctor}: {Personalized} and professionalized medical
  recommendations based on hybrid matrix factorization.
\newblock {\em Future Generation Computer Systems}, 66:30--35, January 2017.

\bibitem{zhao_survey_2016}
Shenglin Zhao, Irwin King, and Michael~R. Lyu.
\newblock A {Survey} of {Point}-of-interest {Recommendation} in
  {Location}-based {Social} {Networks}.
\newblock {\em arXiv:1607.00647 [cs]}, July 2016.
\newblock arXiv: 1607.00647.

\end{thebibliography}
\bibliographystyle{plain}

\end{document}